%
\documentclass[runningheads]{llncs}
\usepackage[T1]{fontenc}
\usepackage{graphicx}
\usepackage{subcaption}
\usepackage{hyperref}
\usepackage{amsmath}
\usepackage{amssymb}
\usepackage{xcolor}
\usepackage{isomath}

\usepackage{amsthm}
\theoremstyle{plain}
\newtheorem{thm}{Theorem} 
\newtheorem*{thm*}{Theorem} 

\def\R{\mathbb{R}}

\def\relu{\mathrm{ReLU}}
\def\mlp{\mathrm{MLP}}
\def\1{\mathds{1}}
\def\our{SGCN}
%

\begin{document}
\title{Spatial Graph Convolutional Networks}  
\titlerunning{Spatial Graph Convolutional Networks}
%
\author{Tomasz Danel\inst{1}\orcidID{0000-0001-6053-0028} \and
Przemys\l{}aw Spurek\inst{1}\orcidID{0000-0003-0097-5521} \and
Jacek Tabor\inst{1}\orcidID{0000-0001-6652-7727} \and 
Marek \'Smieja\inst{1}\orcidID{0000-0003-2027-4132} \and 
\L{}ukasz  Struski\inst{1}\orcidID{0000-0003-4006-356X} \and 
Agnieszka S\l{}owik\inst{2}\orcidID{0000-0001-7113-4098} \and 
\L{}ukasz Maziarka\inst{1}\orcidID{0000-0001-6947-8131}}
\authorrunning{T. Danel et al.}
%
\institute{Faculty of Mathematics and Computer Science, Jagiellonian University, \L{}ojasiewicza 6, 30-428 Krakow, Poland \and
Department of Computer Science and Technology, University of Cambridge,\\ 15 JJ Thomson Ave, Cambridge CB3 0FD, UK}
\maketitle              
\begin{abstract}
Graph Convolutional Networks (GCNs) have recently become the primary choice for learning from graph-structured data, superseding hash fingerprints in representing chemical compounds. However, GCNs lack the ability to take into account the ordering of node neighbors, even when there is a geometric interpretation of the graph vertices that provides an order based on their spatial positions. To remedy this issue, we propose Spatial Graph Convolutional Network (\our{}) which uses spatial features to efficiently learn from graphs that can be naturally located in space. Our contribution is threefold: we propose a GCN-inspired architecture which (i) leverages node positions, (ii) is a proper generalization of both GCNs and Convolutional Neural Networks (CNNs), (iii) benefits from augmentation which further improves the performance and assures invariance with respect to the desired properties. Empirically, \our{} outperforms state-of-the-art graph-based methods on image classification and chemical tasks.

\keywords{Graph convolutional networks \and Convolutional neural networks \and Chemoinformatics.}
\end{abstract}
\section{Introduction}

Convolutional Neural Network (CNNs) use trainable filters to process images or grid-like objects in general. They have quickly overridden feed-forward networks in computer vision tasks, and later also excelled in parsing text data~\cite{kim2014convolutional}, thanks to the small number of parameters and the locality of the extracted features. The convolutional architectures were applied to numerous visual learning tasks on which they outperformed humans, e.g. image classification \cite{krizhevsky2012imagenet}, object detection \cite{seferbekov2018feature} or image captioning \cite{yang2017dense}. However, CNNs can only be used to analyze tensor data in which local patterns are anticipated, e.g. images, text, and time series. One of the common data structures that does not conform to this requirement is graph, which can be used to represent, e.g. social networks, neural networks, city maps, and chemical compounds. In these applications, local patters may also play a key role in processing big graph structures. Borrowing from CNNs, Graph Convolutionl Networks (GCNs) use local filters to aggregate information from neighboring nodes \cite{duvenaud2015convolutional,defferrard2016convolutional}. However, most of these networks do not distinguish node neighbors and apply the same weights to each of them, sometimes modified by node degrees \cite{kipf2016semi}, edge attributes, or trainable attention weights \cite{velivckovic2017graph}.

In many cases, graphs are coupled with spatial information embedded in their nodes. For example, images can be transformed to graphs where nodes correspond to image pixels (color channels). In this case, each pixel has 2-dimensional coordinates, which define its position in the image. In chemical applications, molecules can be represented as graphs constructed from their structural formulas (Figure~\ref{fig:representation}). Additionally, atoms (nodes in a molecular graph) organize themselves in the 3-dimensional space to reach the minimum energy state, and the shape of a compound is called a \textbf{molecular conformation}. Standard GCNs do not take spatial positions of the nodes into account, which is a considerable difference between GCNs and CNNs. Moreover, in the case of images, geometric features allow to augment data with translation or rotation and significantly enlarge the given dataset, which is crucial when the number of examples is limited.

\begin{figure}[tb]
\centering
  \hfill
  \begin{subfigure}[]{0.3\textwidth}
  \centering
    \includegraphics[width=\textwidth]{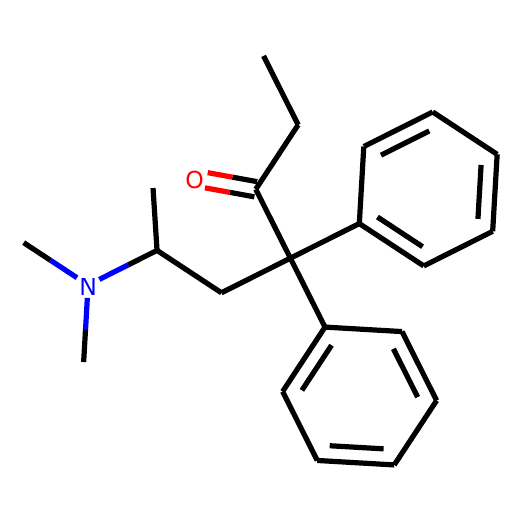}
    \caption{Structural formula}
    \label{fig:formula}
  \end{subfigure}
  \hfill
  \begin{subfigure}[]{0.3\textwidth}
  \centering
    \includegraphics[width=\textwidth]{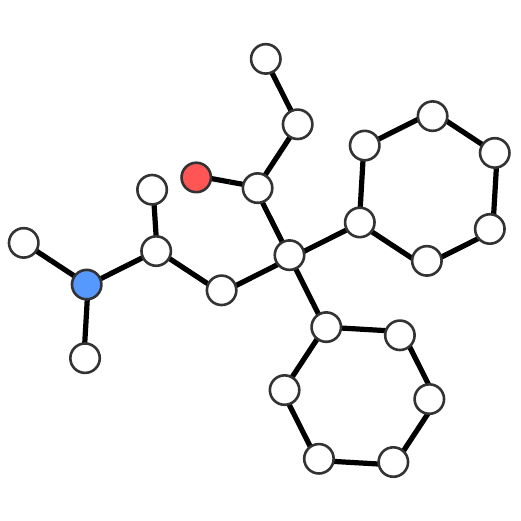}
    \caption{Molecular graph}
    \label{fig:graph}
  \end{subfigure}
  \hfill
  \begin{subfigure}[]{0.3\textwidth}
  \centering
    \includegraphics[width=\textwidth]{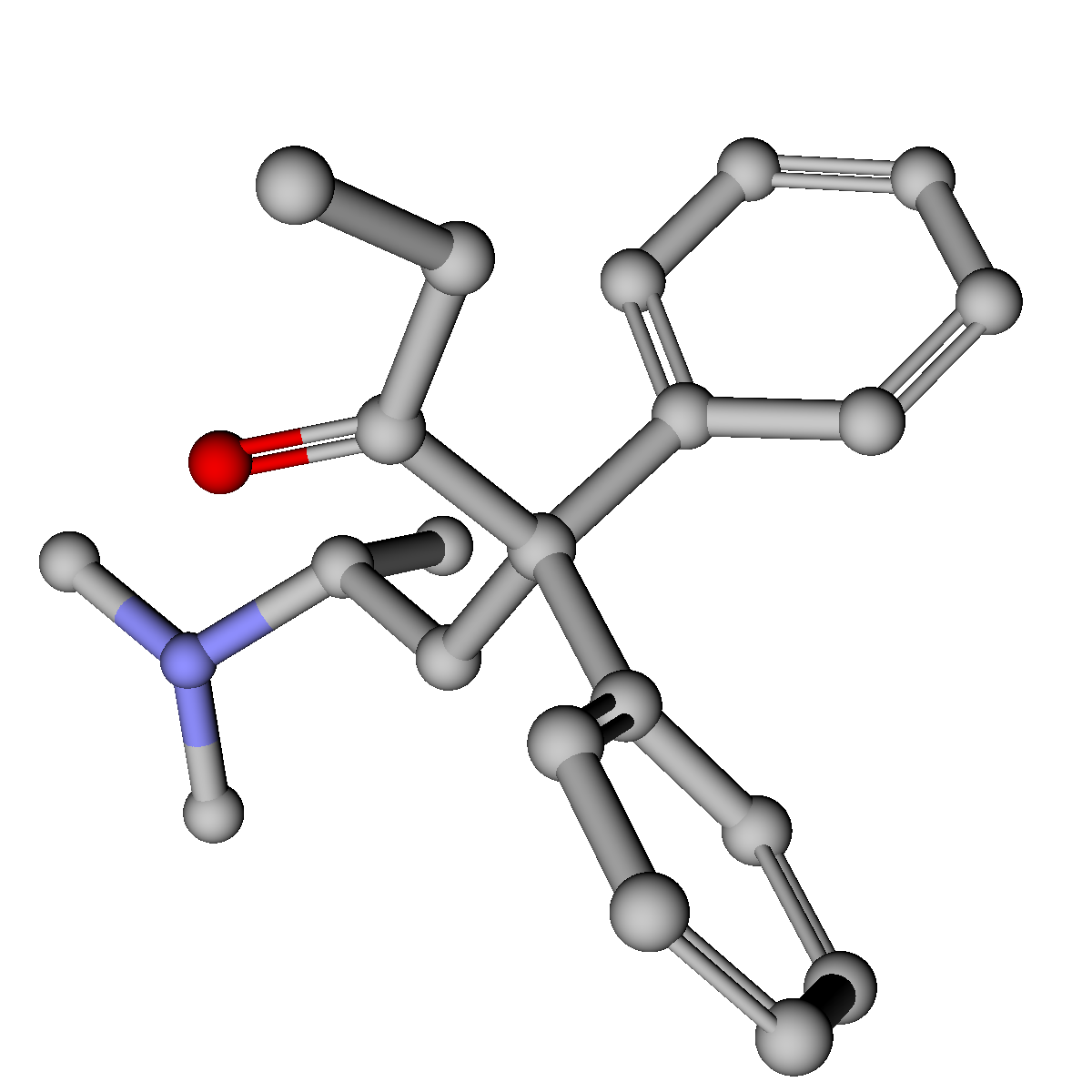}
    \caption{Exemplary conformation}
    \label{fig:conformation}
  \end{subfigure}
  \hfill \phantom{ }
  \caption{Representation of small molecules. 
  (a) shows the structural formula of a compound (methadone). This notation is commonly used by chemists. (b) presents the molecular graph constructed from the structural formula. The vertices denote atoms, and bonds are represented by the undirected edges. (c) depicts a molecular conformation (one of the energetic minima), which is a 3D embedding of the graph.}
  \label{fig:representation}
\end{figure}

In this paper, we propose Spatial Graph Convolutional Networks (\our{}), a variant of GCNs, which is a proper generalization of CNNs to the case of graphs. In contrast to existing GCNs, \our{} uses spatial features of nodes to aggregate information from the neighbors. On one hand, this geometric interpretation is useful to model many real examples of graphs, such as graphs of chemical compounds. In this case, it is possible to perform data augmentation by rotating a given graph in a spatial domain and, in consequence, improve network generalization when the amount of data is limited. Since typical graph convolutions do not take spatial features into account, data augmentation has not been possible in that case. On the other hand, a single layer of \our{} can be parametrized so that it returns an output identical to a standard convolutional layer on grid-like objects, such as images (see Theorem~\ref{thm:conv}). 

The proposed method was evaluated on various datasets and compared with the state-of-the-art methods. \our{} was applied to classify images represented as graphs. The proposed method was also tested on chemical benchmark datasets. Experiments demonstrate that combining spatial information with data augmentation leads to more accurate predictions.

Our contributions can be summarized as follows:
\begin{itemize}
    \item \our{} is proposed as a novel architecture of GCN that can effectively use additional geometric/spatial features to enhance a graph structure.
    \item The proposed architecture is a proper generalization of GCNs and CNNs, which is formally proven in Section~\ref{sec:theory}.
    \item In contrast to the existing approaches, \our{} gives more control over the geometric structure, allowing to exploit the spatial arrangement of graph nodes and to use data augmentation to achieve better performance through the selective invariance of spatial properties, such as node ordering, rotations, or translations. Note that \our{} surpasses classical GCNs when at least one spatial property, e.g. node ordering, is relevant for the given task.
\end{itemize}

\noindent The code is available at \href{https://github.com/gmum/geo-gcn}{github.com/gmum/geo-gcn}.

\section{Spatial graph convolutional network}

In this section, we introduce \our{}. First, we recall a basic construction of standard GCNs. Next, we present the intuition behind our approach and formally introduce \our{}. Finally, we briefly discuss practical advantages of spatial graph convolutions.

We use the following notation throughout this paper: let $\mathcal{G} = (V,\matrixsym{A})$ be a graph, where $V = \{v_1,\ldots,v_n\}$ denotes a set of nodes (vertices) and ${\matrixsym{A} = [a_{ij}]_{i,j=1}^n}$ represents edges. Let $a_{ij} = 1$ if there is a directed edge from $v_i$ to $v_j$, and $a_{ij} = 0$ otherwise. Each node $v_i$ is represented by a $d$-dimensional feature vector $\vectorsym{x}_i \in \R^d$. Typically, graph convolutional neural networks transform these feature vectors over multiple subsequent layers to produce the final prediction.

\subsection{Graph convolutions} 

Let $\matrixsym{H} = [\vectorsym{h}_1, \ldots, \vectorsym{h}_n]$ denote the matrix of node features being an input to a convolutional layer, where $\vectorsym{h}_i\in \mathbb{R}^{d_{in}}$ are column vectors. The dimension of $\vectorsym{h}_i$ is determined by the number of filters used in the previous layer. We denote as $\matrixsym{X} = [\vectorsym{x}_1,\ldots,\vectorsym{x}_n]$ the input representation for the first layer. 

A typical graph convolution is defined by combining two operations. For each node $v_i$, feature vectors of its neighbors $N_i = \{j: a_{ij} = 1\}$ are first aggregated:
\begin{equation}\label{eq:neigh}
    \vectorsym{\bar{h}}_i = \sum_{j \in N_i} u_{ij} \vectorsym{h}_j, 
\end{equation}
which could be also written in a matrix form as $\matrixsym{\bar{H}} = \matrixsym{U}\matrixsym{H}^{T}$.
Where the weights 
$\matrixsym{U} \in \R^{n \times n}$ are either trainable (e.g. \cite{velivckovic2017graph} applied attention mechanism) or determined by adjacency matrix $\matrixsym{A}$ (e.g. \cite{kipf2016semi} motivated their selection using spectral graph theory).

Next, a standard MLP is applied to transform the intermediate representation $\matrixsym{\bar{H}} = [\vectorsym{\bar{h}}_1,\ldots,\vectorsym{\bar{h}}_n]$ into the final output of a given layer:
\begin{equation}\label{eq:mlp}
    \mlp(\matrixsym{\bar{H}}; \matrixsym{W}) = \relu(\matrixsym{W}^T \matrixsym{\bar{H}} + \vectorsym{b}),
\end{equation}
where $\matrixsym{W} \in \mathbb{R}^{d_{in}\times d_{out}}$ is a trainable weight matrix and $\vectorsym{b}\in\mathbb{R}^{d_{out}}$ is a trainable bias vector (added column-wise). A typical graph convolutional neural network is composed of a sequence of graph convolutional layers (described above), see Figure \ref{fig:geom-net}. Next, its output is aggregated to the final response depending on a given task, e.g. node or graph classification.


\begin{figure}
\centering
\includegraphics[width=0.6\textwidth]{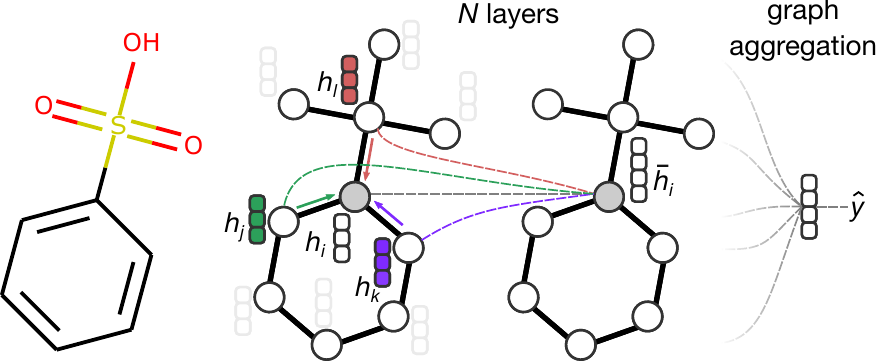} 
\caption{An overview of the full network. A molecule is transformed to the graph representation and fed to the $N$ consecutive (spatial) graph convolutional layers. In the figure, the convolution is demonstrated at the grey node -- feature vectors of the adjacent nodes $\vectorsym{h}_j$, $\vectorsym{h}_k$, and $\vectorsym{h}_l$ are aggregated together with the central node $\vectorsym{h}_i$ to create a new feature vector $\vectorsym{\bar{h}}_i$ for the grey node. In the proposed spatial variant, the relative positions of the neighbors are used in the aggregation (see Equation~\ref{eq:neigh-our}). At the end, all the node vectors are averaged, and the final prediction $\hat{y}$ is produced.}
\label{fig:geom-net}
\end{figure}

\subsection{Spatial graph convolutions}

In this section, the spatial graph convolutions are defined. The basic assumption is that each node $v_i$ is additionally identified by its coordinates $\vectorsym{p}_i \in \R^t$. In the case of images, $\vectorsym{p}_i$ is the vector of two dimensional pixel coordinates, while for chemical compounds, it denotes location of the atom in thr two or three dimensional space (depending on the representation of chemical compound). In contrast to standard features $\vectorsym{x}_i$, $\vectorsym{p}_i$ is not changed across layers, but only used to construct a better graph representation. For this purpose, \eqref{eq:neigh} is replaced by:
\begin{equation} \label{eq:neigh-our}
    \vectorsym{\bar{h}}_i(\matrixsym{U},\vectorsym{b}) = \sum_{j \in N_i} \relu(\matrixsym{U}^T(\vectorsym{p}_j - \vectorsym{p}_i) + \vectorsym{b}) \odot \vectorsym{h}_j,
\end{equation}
where $\matrixsym{U} \in \R^{t \times d}, \vectorsym{b} \in \R^d$ are trainable parameters, $d$ is the dimension of $\vectorsym{h_j}$ and 
$\odot$ is element-wise multiplication. The pair $\matrixsym{U},\vectorsym{b}$ plays a role of a convolutional filter which operates on the neighborhood of $v_i$. The relative positions in the neighborhood are transformed using a linear operation combined with non-linear ReLU function. This scalar is used to weigh the feature vectors $\vectorsym{h}_j$ in a neighborhood.

By the analogy with classical convolution, this transformation can be extended to multiple filters. Let $\tensorsym{U} = [\matrixsym{U}^{(1)},\ldots,\matrixsym{U}^{(k)}]$ and $\matrixsym{B} = [\vectorsym{b}^{(1)},\ldots,\vectorsym{b}^{(k)}]$ define $k$ filters. The intermediate representation $\vectorsym{\bar{h}}_i$ is then a vector defined by:
$$
\vectorsym{\bar{h}}_i (\tensorsym{U}, \matrixsym{B}) = \matrixsym{\bar{h}}_i(\matrixsym{U}^{(1)}, \vectorsym{b}^{(1)}) \oplus \dots \oplus \matrixsym{\bar{h}}_i(\matrixsym{U}^{(k)}, \vectorsym{b}^{(k)}),
$$
where $\oplus$ denotes the vector concatenation.
Finally, MLP transformation is applied in the same manner as in \eqref{eq:mlp} to transform these feature vectors into new representation.

Equation~\ref{eq:neigh-our} can be easily parametrized to obtain graph convolution presented in Equation~\ref{eq:neigh}. If all spatial features $\matrixsym{p}_i$ are put to 0, then \eqref{eq:neigh-our} reduces to:
$$
\vectorsym{\bar{h}}_i(\matrixsym{U},\vectorsym{b}) = \sum_{j \in N_i} \relu(\vectorsym{b}) \vectorsym{h}_j.
$$
This gives a vanilla graph convolution, where the aggregation over neighbors does not contain parameters. Different $\vectorsym{b} = \vectorsym{u}_{ij}$ can also be used for each pair of neighbors, which allows to mimic many types of graph convolutions.

\subsection{Data augmentation}

In practice, the number of training data is usually too small to provide sufficient generalization. To overcome this problem, one can perform data augmentation to produce more representative examples. In computer vision, data augmentation is straightforward and relies on rotating or translating the image. Nevertheless, in the case of classical graph structures, analogical procedure is difficult to apply. This is a serious problem in medicinal chemistry, where the goal is to predict biological activity based only on a small amount of verified compounds. The introduction of spatial features and our spatial graph convolutions allow us to perform data augmentation in a natural way, which is not possible using only the graph adjacency matrix.

The formula \eqref{eq:neigh-our} is invariant to the translation of spatial features, but its value depends on rotation of graph. In consequence, the rotation of the geometrical graph leads to different values of \eqref{eq:neigh-our}. Since in most domains the rotation does not affect the interpretation of the object described by such a graph (e.g. rotation does not change the chemical compound although one particular orientation may be useful when considering binding affinity, i.e. how well a given compound binds to the target protein), this property can be used to produce more instances of the same graph. This reasoning is exactly the same as in the classical view of image processing.

In addition, chemical compounds can be represented in many conformations. In a molecule, single bonds can rotate freely. Each molecule seeks to reach the minimum energy state, and thus some conformations are more probable to be found in nature than the other ones. Because there are multiple stable conformations, augmentation helps to learn only meaningful spatial relations. In some tasks, conformations may be included in the dataset, e.g. in binding affinity prediction, active conformations are those formed inside the binding pocket of a protein. Such a conformation can be discovered experimentally, e.g., through crystallization.

\section{Relation between \our{} and CNNs}
\label{sec:theory}

\begin{figure}
    \centering
  \hfill
  \begin{subfigure}[]{0.3\textwidth}
  \centering
    \includegraphics[width=\textwidth]{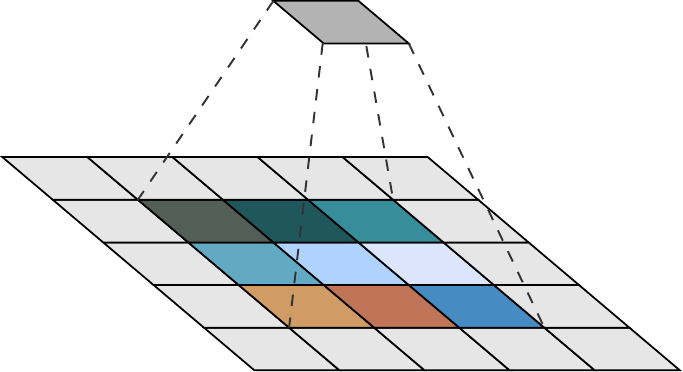}
    \caption{CNN}
    \label{fig:cnn-filter}
  \end{subfigure}
  \hfill
  \begin{subfigure}[]{0.3\textwidth}
  \centering
    \includegraphics[width=\textwidth]{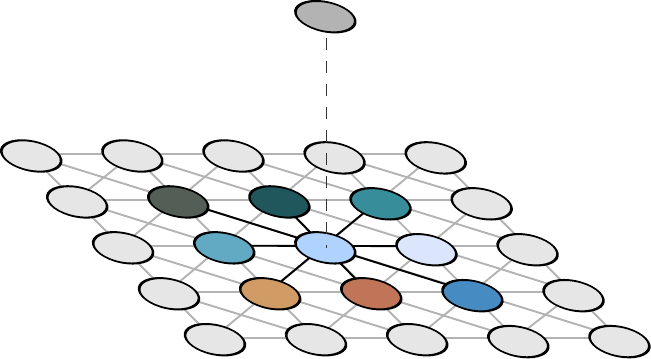}
    \caption{\our{}}
    \label{fig:geo-filter}
  \end{subfigure}
  \hfill
  \begin{subfigure}[]{0.3\textwidth}
  \centering
    \includegraphics[width=\textwidth]{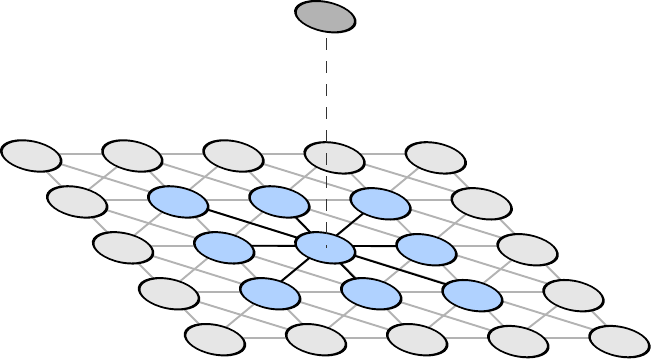}
    \caption{GCN}
    \label{fig:gcn-filter}
  \end{subfigure}
  \hfill \phantom{ }
  \caption{Comparison of different neural convolutional filters. Each color denotes different trainable weights. (a) shows convolutional filters for images, (b) shows our spatial graph convolutions, and (c) depicts graph convolutions.}
  \label{fig:filters}
\end{figure}

In contrast to typical GCNs, which consider graphs as relational structures, \our{} assumes that a graph can be coupled with a spatial structure, e.g. chemical compounds is a graph determined by atoms and bonds, which are embedded in 3D space. In particular, if we represent an image as a graph, where neighbor pixels are connected with edges, \our{} is capable of imitating the behavior of any CNNs operating on analogical image (see Figure~\ref{fig:filters}). In other words, the formula \eqref{eq:neigh-our} is constructed so that to parametrize any convolutional filter defined by classical CNNs. In this section, we first give a formal proof of this fact and next confirm this observation in an experimental study.

\paragraph{Theoretical findings.} Let us introduce a notation concerning convolutions in the case of images. For simplicity only convolutions without pooling and with odd mask size are considered. 
In general, given a filter $\matrixsym{F}=[f_{i'j'}]_{i',j' \in \{-k..k\}}$ its result on the image ${\matrixsym{H}=[h_{ij}]_{i \in \{1..N\},j \in \{1..K\}}}$ is given by
$$
\matrixsym{F}*\matrixsym{H}=\matrixsym{G}=[g_{ij}]_{i \in \{1..N\},j \in \{1..K\}},
$$
where
$$
g_{ij}=\sum_{\begin{array}{l} {}_{i'=-k..k:\, i+i'\in \{1..N\},} \\ {}_{j'=-k..k:\, j+j'\in \{1..K\}} \end{array}} f_{i'j'} h_{i+i',j+j'}.
$$

\begin{thm} \label{thm:conv}
Let $\matrixsym{H} \in \R^{N \times K}$ be an image. Let $\matrixsym{F}=[f_{i'j'}]_{i',j' \in \{-k..k\}}$ be a given convolutional filter, and let $n=(2k+1)^2$ (number of elements of $\matrixsym{F}$). Then there exist \our{} parameters: $\matrixsym{U} \in \R^{2 \times 1}$, $\vectorsym{b}_1,\ldots,\vectorsym{b}_n \in \R$, and $\vectorsym{w} \in \R^n$
such that the image convolution can be represented as \our{}, i.e.
$$
\matrixsym{F}*\matrixsym{H}=\sum_{i=1}^{n} \vectorsym{w}_i \matrixsym{\bar{H}}(\matrixsym{U},\vectorsym{b}_i).
$$
\end{thm}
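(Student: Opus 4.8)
\emph{Proof proposal.} The plan is to realise the image as a spatial graph and then reduce the claimed identity to a one–dimensional ReLU interpolation problem. First I would fix the graph: the nodes are the pixels $(i,j)$, each carrying the scalar feature $h_{ij}$ (so $d_{in}=1$) and the position $\vectorsym{p}_{ij}=(i,j)\in\R^2$, and I would put an edge from $(i,j)$ to $(i+i',\,j+j')$ for every $i',j'\in\{-k..k\}$ with $(i+i',\,j+j')$ inside the image — in particular a self-loop at $i'=j'=0$. Then $\vectorsym{p}_{i+i',j+j'}-\vectorsym{p}_{ij}=(i',j')$, so for a single filter $\matrixsym{U}=(u_1,u_2)^{T}\in\R^{2\times1}$ and a scalar bias $b$ equation~\eqref{eq:neigh-our} becomes
$$
\matrixsym{\bar{H}}(\matrixsym{U},b)_{ij}=\sum_{i',j'}\relu(u_1 i'+u_2 j'+b)\,h_{i+i',j+j'},
$$
the inner sum running over exactly the in-range offsets used to define $g_{ij}$. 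Consequently $\sum_{m=1}^{n} w_m\,\matrixsym{\bar{H}}(\matrixsym{U},b_m)=\matrixsym{F}*\matrixsym{H}$ holds (for every image, hence for the given $\matrixsym{H}$) as soon as for each offset $(i',j')\in\{-k..k\}^2$
$$
\sum_{m=1}^{n} w_m\,\relu(u_1 i'+u_2 j'+b_m)=f_{i'j'};
$$
the boundary truncation takes care of itself, since both sides discard precisely the out-of-range terms.

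Next I would choose $\matrixsym{U}$ so that the linear form $(i',j')\mapsto u_1 i'+u_2 j'$ is injective on the grid $\{-k..k\}^2$ — e.g. $u_1=1$, $u_2=2k+1$ — and list the $n$ offsets as $(i'_1,j'_1),\dots,(i'_n,j'_n)$ with strictly increasing projections $z_1<\dots<z_n$, where $z_\ell=u_1 i'_\ell+u_2 j'_\ell$; write $\phi_\ell:=f_{i'_\ell j'_\ell}$. It then remains to find scalars $b_m,w_m$ with $\sum_m w_m\,\relu(z_\ell+b_m)=\phi_\ell$ for all $\ell$, which is the standard fact that $n$ ReLUs interpolate $n$ prescribed values. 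Concretely I would set $z_0:=z_1-1$ and $b_m:=-z_{m-1}$; then the matrix $T=[\relu(z_\ell+b_m)]_{\ell,m}=[\max(z_\ell-z_{m-1},0)]_{\ell,m}$ is lower triangular with diagonal entries $z_\ell-z_{\ell-1}>0$, hence invertible, and $\vectorsym{w}:=T^{-1}(\phi_1,\dots,\phi_n)^{T}$ (obtained by forward substitution) solves the system. All of $\matrixsym{U}\in\R^{2\times1}$, $b_1,\dots,b_n\in\R$ and $\vectorsym{w}\in\R^n$ then have the shapes required by the statement.

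The only step that needs an idea rather than bookkeeping is the observation that a \emph{single} shared $\matrixsym{U}$ suffices: the $n$ filters differ only in their biases, and it is the injectivity of the position projection together with the strict ordering of the $z_\ell$ that forces the ReLU interpolation matrix to be triangular with nonzero diagonal. If two distinct offsets collided under $(i',j')\mapsto u_1 i'+u_2 j'$, no function of that single scalar could reproduce two different filter entries, so this choice of $\matrixsym{U}$ is genuinely the crux; the graph construction, the reduction to coefficient-wise equality, and the forward substitution are all routine. It is also worth noting that the theorem only asks to write $\matrixsym{F}*\matrixsym{H}$ as a linear combination of aggregation outputs, so the MLP/ReLU of~\eqref{eq:mlp} plays no role here — the coefficients $w_m$ act as that final linear read-out.
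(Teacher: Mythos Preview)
Your proposal is correct and follows essentially the same route as the paper: both pick a single direction $\matrixsym{U}=\vectorsym{u}$ making the projection $(i',j')\mapsto \vectorsym{u}^T(i',j')$ injective on the offset grid, order the offsets by this projection, and choose biases so that the resulting ReLU coefficients form a triangular system that is solved by forward/back substitution. The only cosmetic differences are that the paper orders the projections decreasingly and places each $b_i$ strictly between consecutive projection values (then phrases the inversion via the elementary filters $\matrixsym{F}_i$), whereas you order increasingly, set $b_m=-z_{m-1}$ with an auxiliary $z_0$, and write the triangular system explicitly as a matrix equation.
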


\begin{proof}
Let $P \subset \R^2$ denote all possible positions in the convolutional filter $\matrixsym{F}$, i.e. $P=\left\{ [i', j']^T:i',j' \in \{-k,\ldots,k\}\right\}$.
Let $\vectorsym{u} \in \R^2$ denote an arbitrary vector which is not orthogonal to any element from $(P-P)$.
Let also consider that $\matrixsym{U} = [\vectorsym{u}]^T$ and $\vectorsym{b}_{i} = b_{i}$.
Then:
$$
\vectorsym{u}^T \vectorsym{p} \neq \vectorsym{u}^T \vectorsym{q}, \quad \text{ for } \vectorsym{p},\vectorsym{q} \in P: \  \vectorsym{p} \neq \vectorsym{q}.
$$
Consequently, the elements of $P$ may be ordered so that $\vectorsym{u}^T\vectorsym{p}_1 > \ldots > \vectorsym{u}^T \vectorsym{p}_n$. Let $\matrixsym{F}_i$ denote the convolutional filter, which has value one at the position $\vectorsym{p}_i$, and zero otherwise.

Let now choose $b_i$, such that:
$$
b_i \in (-\vectorsym{u}^T\vectorsym{p}_i,-\vectorsym{u}^T\vectorsym{p}_{i+1}) \  \text{ for } i<n; \quad
b_n > -\vectorsym{u}^T\vectorsym{p}_n.
$$

For example one may take:
$$
b_i=-\vectorsym{u}^T\frac{\vectorsym{p}_i+\vectorsym{p}_{i+1}}{2} \  \text{ for } i<n; \quad
b_n=-\vectorsym{u}^T\vectorsym{p}_n+1.
$$

Then observe that 
$$
\begin{array}{l}
\matrixsym{\bar{H}}(\matrixsym{U},\vectorsym{b}_1)=(\vectorsym{u}^T\vectorsym{p}_1+b_1) \cdot \matrixsym{F}_1 * \matrixsym{H} , \\
\matrixsym{\bar{H}}(\matrixsym{U},\vectorsym{b}_2)=\left[(\vectorsym{u}^T\vectorsym{p}_1+b_2) \cdot \matrixsym{F}_1+(\vectorsym{u}^T\vectorsym{p}_2+b_2) \cdot \matrixsym{F}_2\right] * \matrixsym{H},
\end{array}
$$
and generally for every $k=1..n$ we get
$$
\matrixsym{\bar{H}}(\matrixsym{U},\vectorsym{b}_k)=\left[\sum_{i=1}^k (\vectorsym{u}^T\vectorsym{p}_i+b_k) \matrixsym{F}_i\right] * \matrixsym{H},
$$
where all the coefficients in the above sum are strictly positive.

Consequently, 
$$
\matrixsym{F}_1*\matrixsym{H}=\frac{\matrixsym{\bar{H}}(\matrixsym{U},\vectorsym{b}_1)}{\vectorsym{u}^T\vectorsym{p}_1+b_1},
$$
and we obtain recursively that
$$
\matrixsym{F}_k*\matrixsym{H} =\tfrac{1}{\vectorsym{u}^T\vectorsym{p}_k+b_k}\matrixsym{\bar{H}}(\matrixsym{U},\vectorsym{b}_k)-
\tfrac{1}{\vectorsym{u}^T\vectorsym{p}_k+b_k}\sum_{i=1}^{k-1} (\vectorsym{u}^T\vectorsym{p}_i+b_k) \matrixsym{F}_i * \matrixsym{H},
$$
which trivially implies that every convolution $\matrixsym{F}_k * \matrixsym{H}$ can be obtained as a linear combination of $(\matrixsym{\bar{H}}(\matrixsym{U},\vectorsym{b_i}))_{i=1..k}$.

Since an arbitrary convolution $\matrixsym{F}=[f_{ij}]$ is given by $\matrixsym{F}=\sum_{i=1}^n f_{p_i} \matrixsym{F_i}$,
we obtain the assertion of the theorem.
\end{proof}

\paragraph{Experimental verification.} To experimentally demonstrate the correspondence between CNNs and \our{}, we consider the well-known MNIST dataset. To construct its graph representation, each pixel is mapped to a graph node, making a regular grid with connections between adjacent pixels. The node has 2-dimensional location, and it is characterized by a 1-dimensional pixel intensity. To show further capabilities of \our{}, we also consider an alternative representation \cite{monti2017geometric}, in which nodes are constructed from an irregular grid consisting of 75 superpixels. The edges are determined by spatial relations between nodes using k-nearest neighbors. 

For a comparison, we report the results from the literature by state-of-the-art methods used to process geometrical shapes: ChebNet \cite{defferrard2016convolutional}, MoNet \cite{monti2017geometric}, SplineCNN \cite{fey2018splinecnn} and GAT \cite{velivckovic2017graph}. In the first case of regular grid representation, \our{} is also compared to CNN with an analogical architecture, i.e. number of filters etc..

The results presented in Table \ref{tab:MNIST} show that \our{} outperforms comparable methods on both variants on MNIST dataset. Its performance is slightly better than SplineCNN, which reports state-of-the-art results on this task. We also get higher accuracy than CNN, which confirms experimentally that \our{} is its proper generalization.

\begin{table}
\normalsize
\caption{Classification accuracy on two graph representations of MNIST.}
\begin{center}
{\small
\begin{tabular}[width=\textwidth]{lcc}
\hline
{\bf Method} & {\bf Grid}  & {\bf Superpixels}  \\
\hline             
 CNN & 99.21\% & - \\
 ChebNet  & 99.14\% & 75.62\% \\
 MoNet & 99.19\%  & 91.11\% \\
 SplineCNN & 99.22\% & 95.22\% \\
 GAT & 99.26\% & 95.83\% \\
\our{} & {\bf 99.61\%} & {\bf 95.95\%} \\ 
\hline
\end{tabular}
}
\end{center}
\label{tab:MNIST}
\end{table}

\section{Experiments: a case study in predicting molecular properties of chemical compounds }

In this section, we take into account graphs representing chemical compounds and perform a large scale experimental verification on real-life tasks. 

\paragraph{Experimental setting.} Three datasets were chosen from MoleculeNet~\cite{molnet}, which is a benchmark for molecule-related tasks. Blood-Brain Barrier Permeability (BBBP) is a binary classification task of predicting whether or not a given compound is able to pass through the barrier between blood and the brain, allowing the drug to impact the central nervous system. Another two datasets, ESOL and FreeSolv, are solubility prediction tasks with continuous targets.

The datasets contain small molecules which are translated to the molecular graphs as it was presented in Figure~\ref{fig:representation}. At each node, there is a constructed feature vector describing the atom, which includes the atom type, hybridization, formal charge, number of heavy neighbors, number of attached hydrogens, whether or not the atom is in a ring, and whether or not it is aromatic. To predict atom positions in 3D space, we use universal force field (UFF) method from the RDKit package, which finds such a conformation of compound that minimizes the energy of molecule. Since UFF is not deterministic we run it a few times (up to 30) and additionally augment the data with random rotations. Datasets are split into train, validation and test subsets according to the  MoleculeNet standards. A random search is run for all models testing 100 hyperparameter sets for each of them. All runs are repeated 3 times.

\our{} is benchmarked against popular chemistry models: graph-based models (Graph Convolution~\cite{duvenaud2015convolutional}, Weave Model~\cite{weave}, and Message Passing Neural Network~\cite{gilmer2017neural}). We also use classical methods such as random forest and SVM, which do not operate on graph but rather they use a vector representation of chemical compound (ECFP fingerprint~\cite{ecfp} of size 1024). In addition, EAGCN~\cite{shang2018edge} and MAT~\cite{maziarka-mat} are included in the experiment, both of them using an attention mechanism. As for our method, the results are shown with train- and test-time augmentation of the data carried out in the manner described above\footnote{For all datasets, slight improvements can be observed with the augmented data.}. In order to investigate the impact of the positional features, the atom representation of the classical graph convolutional network is also enriched with the predicted atom positions, and the same procedure of augmentation is applied. We name this enriched architecture pos-GCN and include it in the comparison.

\begin{table}
\footnotesize
\caption{Performance on three chemical datasets measured with ROC AUC for BBBP and RMSE for ESOL and FreeSolv datasets. Best mean results and intervals overlapping with them are bolded.  For the first column higher is better, for the second and the third lower is better.}
\begin{center}
\begin{tabular}[width=\textwidth]{lccc}
\hline
Method & BBBP & ESOL & FreeSolv \\
\hline             
SVM     & 0.603 $\pm$ 0.000  & 0.493 $\pm$ 0.000 & 0.391 $\pm$ 0.000 \\
RF      & 0.551 $\pm$ 0.005  & 0.533 $\pm$ 0.003 & 0.550 $\pm$ 0.004 \\
GC      & 0.690 $\pm$ 0.015  & 0.334 $\pm$ 0.017 & 0.336 $\pm$ 0.043 \\
Weave   & 0.703 $\pm$ 0.012  & 0.389 $\pm$ 0.045 & 0.403 $\pm$ 0.035 \\
MPNN    & 0.700 $\pm$ 0.019  & 0.303 $\pm$ 0.012 & \bf{0.299 $\pm$ 0.038} \\
EAGCN   & 0.664 $\pm$ 0.007  & 0.459 $\pm$ 0.019 & 0.410 $\pm$ 0.014 \\
MAT      & 0.711 $\pm$ 0.007  & 0.330 $\pm$ 0.002 & \bf{0.269 $\pm$ 0.007} \\
pos-GCN & 0.696 $\pm$  0.008  & 0.301 $\pm$ 0.011 & \bf{0.278 $\pm$ 0.024} \\
\our{}  &   \bf{0.743 $\pm$  0.004} & \bf{ 0.270 $\pm$ 0.005} & \bf 0.299 $\pm$ 0.033 \\
\hline
\end{tabular}
\end{center}
\label{tab:comp}
\end{table}

\paragraph{Results.} The results presented in Table \ref{tab:comp} show that for the first two datasets, \our{} outperforms all tested models by a significant margin, i.e. the difference between \our{} and other methods is statistically significant. In the case of FreeSolv dataset the mean results obtained by \our{} is slightly worse than MAT and pos-GCN, but this difference is not statistically significant due to the relatively high variance. We emphasize that FreeSolv is extremely tiny dataset with only 513 examples, which makes it difficult to reliably compare the methods. 

It is evident from the experiments that including positional features consistently improves the performance of the models across all tasks. For the smallest dataset, FreeSolv, pos-GCN even surpasses the score of \our{}. Nevertheless, learning from bigger datasets requires a better way of managing positional data, which can be noted for ESOL and BBBP datasets for which pos-GCN performs significantly worse than \our{} but still better than the vanilla~GC.

\begin{figure}
\normalsize
\begin{center}
\includegraphics[width=\textwidth]{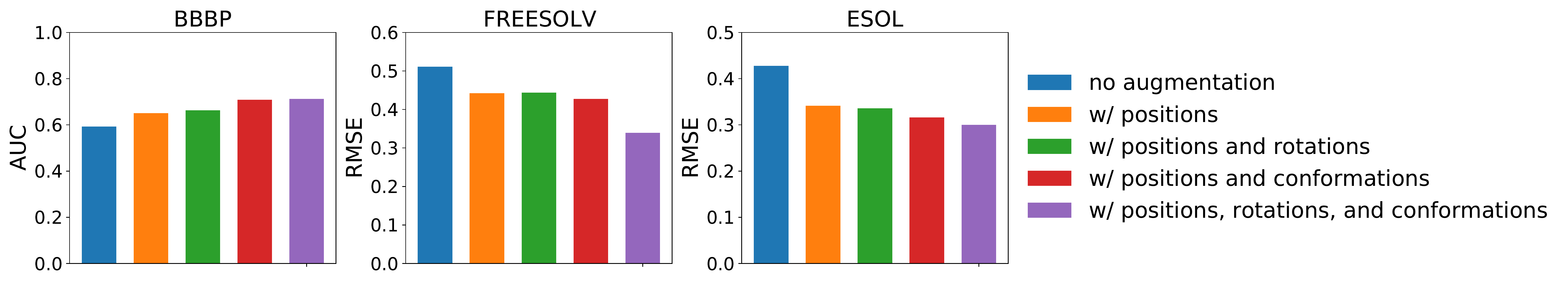} 
\end{center}
\caption{Comparison of different augmentation strategies on three chemical datasets. No augmentation is a pure GCN without positions. In the conformation variant multiple conformations were precalculated and then sampled during training. Rotation augmentation randomly rotates molecules in batches. For the first bar-plot higher is better, for the second and the third lower is better.}
\label{fig:conformations_rotations}
\end{figure}

\begin{figure}
\centering
\includegraphics[width=0.8\textwidth]{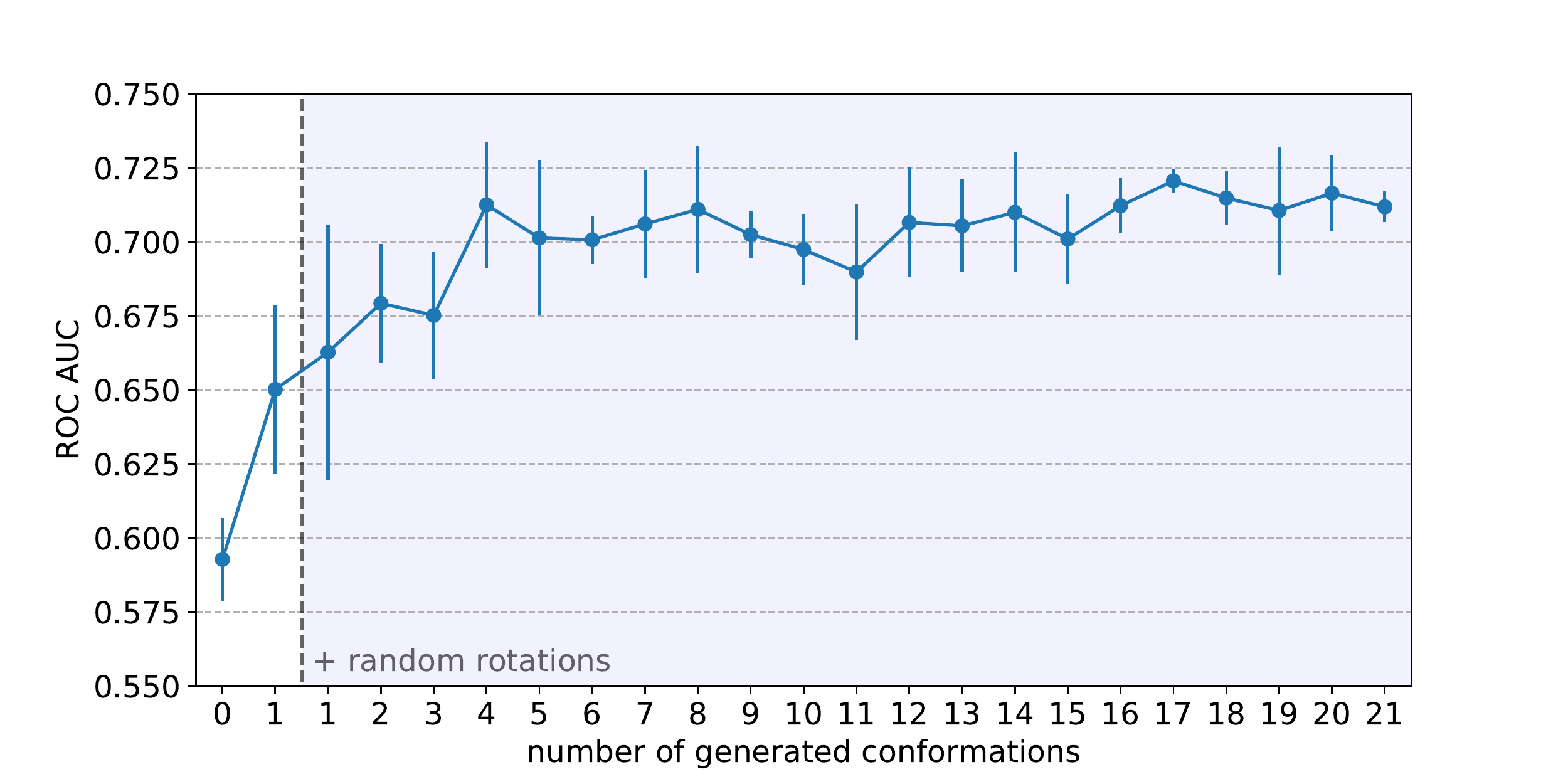} 
\caption{ROC AUC scores achieved on the BBBP dataset by models using a different quantity of data augmentation. The first two models on the left (before the dashed line) are a standard GCN without node positions and \our{} with only one conformation (without any data augmentation) respectively. The models on the right of the dashed line are augmented with random rotations of a molecule. The amount of augmentation increases from left to right.}
\label{fig:conformations}
\end{figure}

\paragraph{Ablation study of the data augmentation.} In the case of CNNs, data augmentation is often used to extend the dataset through random image transformations, but for GCNs there are no extensive studies on data augmentation as they would involve graph transformations of some sort. In contrast to classical GCNs, the proposed \our{} allows to introduce data augmentation to enlarge the given dataset by transforming the geometry of a graph. In this experiment, we investigate the influence of data augmentation on the \our{} performance. 

First, we examined how removing predicted positions, and thus setting all positional vectors to zero in Equation~\ref{eq:neigh-our}, affects the scores achieved by our model on chemical tasks. The results are depicted in Figure~\ref{fig:conformations_rotations}. It clearly shows that even predicted node coordinates improve the performance of the method. On the same plot we also show the outcome of augmenting the data with random rotations and 30 predicted molecule conformations, which were calculated as described above. It occurs that the best performing model uses all types of position augmentation.

Eventually, we study the impact of various levels of augmentation. For this purpose, we precalculate 20 molecular conformations on the BBBP dataset using the UFF method and use them to augment the dataset. To test the importance of conformation variety, each run the number of available conformations to sample from is increased. The results are presented in Figure~\ref{fig:conformations}. One can see that including a bigger number of conformations helps the model to achieve better results. Also, the curve flattens out after a few conformations, which may be caused by the limited flexibility of small compounds and high similarity of the predicted shapes. It should be noted that data augmentation brings a huge improvement to the model, with more than 0.06 increase of the ROC AUC, which is enough to beat other models in the benchmark presented in the previous section. 

\section{Conclusion}

We proposed \our{} which is a general model for processing graph-structured data with spatial features. Node positions are integrated into our convolution operation to create a layer which generalizes both GCNs and CNNs. In contrast to the majority of other approaches, our method can effectively use added information about location to construct self-taught feature masking, which can be augmented to achieve invariance with respect to the desired properties. Furthermore, we provide a theoretical analysis of our spatial graph convolutions. The experiments confirm the strong performance of our method. 

It is also apparent in the benchmarks that both the local spatial features and graph-induced dependencies are needed to fully capture the nature of data in the presented setups. In the experiments, it was presented that not only does the information about conformations benefit the training, but also the means of processing the spatial information of local neighborhood are crucial to a strong performance.  

\bibliographystyle{splncs04}
\bibliography{ref}

\end{document}